\documentclass[11pt]{article}

\usepackage{fullpage}

\let\proof\relax 
\let\endproof\relax 

\usepackage{amsmath,amssymb,amsfonts}
\usepackage{bbm}

\usepackage{natbib}
 \bibpunct[, ]{(}{)}{,}{a}{}{,}%

\usepackage{rotating}
\usepackage{fancyvrb}

\usepackage{xcolor}
\usepackage[colorlinks=true, citecolor=green!60!black]{hyperref}

\usepackage{parskip}
\setlength{\parindent}{0pt}
\usepackage{array}

\usepackage[ruled,noend]{algorithm2e} 

\SetAlFnt{\small}
\SetAlCapFnt{\small}
\SetAlCapNameFnt{\small}
\SetAlCapHSkip{0pt}
\IncMargin{-\parindent}

\newcommand{\norm}[1]{\Vert #1 \Vert}
\newcommand{\abs}[1]{\vert #1 \vert}
\renewcommand{\dot}[2]{\left\langle #1, #2 \right\rangle}

\renewcommand{\P}{\mathbb{P}}

\newcommand{\Vol}{\textsf{Vol}}
\newcommand{\Reg}{\text{\upshape \textsf{Regret}}}
\newcommand{\cg}{\textsf{cg}}

\newcommand{\R}{\mathbb{R}}
\newcommand{\B}{\mathsf{B}}
\newcommand{\eps}{\varepsilon}
\newcommand{\poly}{\textsf{poly}}
\newcommand{\1}{\mathbbm{1}}
\newcommand{\sgn}{\textsf{sign}}

\newcommand{\bbR}{\mathbb{R}}

\newcommand{\thetast}{\theta^{\star}}
\newcommand{\yst}{y^{\star}}

\newcommand{\xst}{x^{\star}}

\newcommand{\bisection}{\textsc{Bisection}}
\newcommand{\cutplane}{\textsc{CutPlanes}}

\newcommand{\croco}{\textsc{CRoCO}}
\newcommand{\crocs}{\textsc{CRoCS}}

\usepackage[suppress]{color-edits}
\addauthor{cp}{red}
\addauthor{renato}{blue}
\addauthor{rpl}{blue}
\addauthor{jon}{green}

\newcommand\numberthis{\addtocounter{equation}{1}\tag{\theequation}}

\title{Density-Based Algorithms for Corruption-Robust Contextual Search and Convex Optimization\footnote{An extended abstract of this work was published at COLT22 under the title ``Corruption-Robust Contextual Search through Density Updates''. The main change in the title is that we have included a reference to ``Convex Optimization'', which is a new contribution of this work and didn't exist in the extended abstract.}}
\author{Renato Paes Leme\footnote{Google Research NYC, \texttt{renatoppl@google.com}} \and 
Chara Podimata\footnote{MIT, \texttt{podimata@mit.edu}. Part of the work was done while the author was a PhD intern at Google NYC.} \and
Jon Schneider\footnote{Google Research NYC, \texttt{jschnei@google.com}}}
\date{\today}

\newtheorem{theorem}{Theorem}[section]
\newtheorem{lemma}[theorem]{Lemma}
\newtheorem{proposition}[theorem]{Proposition}
\newtheorem{definition}[theorem]{Definition}

\newtheorem{corollary}[theorem]{Corollary}

\newcommand{\Halmos}{\blacksquare}

\begin{document}
\maketitle

\begin{abstract}
    We study the problem of contextual search, a generalization of binary search in higher dimensions, in the adversarial noise model. Let $d$ be the dimension of the problem, $T$ be the time horizon and $C$ be the total amount of adversarial noise in the system. We focus on the $\eps$-ball and the \rpledit{symmetric} loss. For the $\eps$-ball loss, we give a tight regret bound of $O(C + d \log(1/\eps))$ improving over the $O(d^3 \log(1/\eps) \log^2(T) + C \log(T) \log(1/\eps))$ bound of Krishnamurthy et al (Operations Research '23). For the \rpledit{symmetric} loss, we give an efficient algorithm with regret $O(C+d \log T)$. To tackle the \rpledit{symmetric} loss case, we study the more general setting of Corruption-Robust Convex Optimization with Subgradient feedback, which is of independent interest.

Our techniques are a significant departure from prior approaches. Specifically, we keep track of density functions over the candidate target vectors instead of a knowledge set consisting of the candidate target vectors consistent with the feedback obtained.
\end{abstract}

\section{Introduction}
Contextual search is a fundamental primitive in online learning with binary feedback with applications to dynamic pricing~\citep{kleinberg2003value} and personalized medicine~\citep{bayati2016}. In contextual search, there is a repeated interaction between a learner and nature; roughly speaking, in each round, the learner chooses an action based on contextual information that is revealed by nature and observes only a single bit of feedback (e.g., ``yes'' or ``no''). %
In the classic (i.e., realizable and noise-free) version, there exists a hidden vector $\thetast \in \R^d$ with $\norm{\thetast} \leq 1$ that the learner wishes to learn over time. Each round $t \in [T]$ begins with the learner receiving a context $u_t \in \R^d$ with $\norm{u_t}= 1$; this context is chosen (potentially) adversarially by nature. The learner then chooses an action $y_t \in \R$, learns the sign $\sigma_t = \sgn(\dot{u_t}{\thetast}-y_t) \in \{+1, -1\}$ and incurs loss $\ell(y_t, \dot{u_t}{\thetast})$. Importantly, the learner does \emph{not} get to observe the loss they incur, but only the sign $\sigma_t$. In this classic setting, a sequence of recent papers \citep{amin2014repeated, cohen2016feature, lobel2016multidimensional, Intrinsic18, liu21} obtained the optimal regret\footnote{We use the terms ``regret'' and ``total loss'' interchangeably.} bounds for various loss functions, as highlighted in Table~\ref{table:contextual-search}.
\begin{table}[htbp]
\centering
\begin{tabular}{ |l|l|l|l|} 
 \hline
 \textbf{Loss}  & $\ell(y_t, \yst_t)$ & \textbf{Lower Bound} &  \textbf{Upper Bound} \\ 
 \hline
  \hline
 $\eps$-ball & $\1 \{ \abs{\yst_t - y_t} \geq \eps \}$ & $\Omega(d \log(1/\eps))$ & $O(d \log(1/\eps))$  \citep{lobel2016multidimensional} \\
 \hline
 \rpledit{symmetric} & $\abs{\yst_t - y_t}$ & $\Omega(d)$ & $O(d\log d)$ \citep{liu21}\\
 \hline
 pricing & $\yst_t - y_t \1 \{ y_t \leq \yst_t \}$ & $\Omega(d \log \log T)$ & $O(d\log \log T + d \log d)$ \citep{liu21}\\
 \hline
\end{tabular}
\caption{Optimal regret guarantees for realizable contextual search.}\label{table:contextual-search}
\end{table}
The matching (up to $\log d$) upper and lower bounds in Table~\ref{table:contextual-search} indicate that the noise-free version of the problem is well understood. Beyond the classic setting, however, a lot of questions remain  when the feedback that the learner receives is perturbed by some type of noise (as is often the case in practical settings), i.e., the target value $\yst_t = \dot{u_t}{\thetast}$ is perturbed to  $\yst_t = \dot{u_t}{\thetast} + z_t$, where $z_t$ is a random variable modeling the added noise. Most of the literature thus far has focused on stochastic noise models 
\citep{nazerzadeh2016,cohen2016feature,javanmard2017perishability,shah2019semi,liu21,xu2021logarithmic,xu2022towards}, i.e., where $z_t$ is drawn from some prespecified distribution.

A recent trend in machine learning is the study of \textit{adversarial noise models}, often also called \textit{corrupted noise models}. In this model, most of the data follows a learnable pattern but an adversary can corrupt a small fraction of it. The goal is to design learning algorithms whose performance robustly degrades as a function of how much corruption was added to the data, e.g., in terms of the total number of corrupted rounds $C$. In the context of contextual search, this problem was first studied by \citet{krishnamurthy2023contextual}, who provided an algorithm with a regret bound of $O(d^3 \log(1/\eps) \log^2(T) + C \log(T) \log(1/\eps))$ for the $\eps$-ball loss (i.e., $\ell(y_t, \yst_t) = \1 \{ |\yst_t - y_t| \geq \eps\}$) and $O(d^3 \log^3(T) + C \log^2(T))$ for the \rpledit{symmetric} and pricing losses (i.e., $\ell(y_t, \yst_t) = |\yst_t - y_t|$ and $\ell(y_t, \yst_t) = \yst_t - y_t \1 \{y_t \leq \yst_t\}$ respectively).

In this paper, we provide new corruption-robust learning algorithms for contextual search with \emph{near-optimal regret guarantees}. Specifically, if $C$ denotes the total corruption, then we show the following:

\begin{enumerate}
    \item For the \emph{symmetric loss}, we give an efficient algorithm with regret $O(C+d \log T)$. This bound also extends to the setting where $C$ measures the total absolute corruption, i.e., $C = \sum_{t=1}^{T} |z_t|$.
    \item For the \emph{$\eps$-ball loss}, we give an algorithm with a tight regret bound of $O(C + d \log(1/\eps))$. This improves on the earlier bound of \citet{krishnamurthy2023contextual}. %

\end{enumerate}

To obtain the algorithm and the regret bounds for the symmetric loss, we investigate a more general setting with results that are of independent interest (\emph{Corruption-Robust Convex Optimization} ($\croco$)), which we introduce next.

\subsection{Corruption-Robust Convex Optimization}

In the standard problem of Online Convex Optimization, there is a fixed (bounded, Lipschitz) convex function $f: K \subseteq \mathbb{R}^d \rightarrow \mathbb{R}$ that the learner wishes to learn. The learner's interaction with this function is via a first-order oracle~\footnote{In contrast, a zero-th order oracle for $f$ would return the value of $f(x_t)$.}, where the learner can query a point $x_t \in K$ in the domain of $f$ and is told a subgradient $\nabla_t$ of the function at this point $x_t$. The learner would like to use this oracle to compute the minimizer $x^*$ of $f$, and more generally would like to minimize their total regret $\sum_{t \in [T]} (f(x_t) - f(x^*))$ after $T$ queries to this oracle. Convex optimization is a fundamental and incredibly well-studied problem with many efficient algorithms -- for example, gradient descent and the ellipsoid method -- that find an approximate minimizer of $f$ while incurring small regret (see Table \ref{table:croco-comparison} for a summary of a few of these methods and their properties).

Again, our interest is in optimization settings with adversarial corruptions. In the context of convex optimization, this most naturally takes the form of an adversarially perturbed oracle, which is free to report an $\eps_t$-perturbed subgradient $\tilde{\nabla}$ (see Equation~\eqref{eq:corrupted_oracle} for a precise definition). Many algorithms for convex optimization (such as the ellipsoid algorithm) are not at all robust to corruptions, since corruptions can cause them to permanently remove the true minimizer from consideration. Other algorithms (such as standard gradient descent) are somewhat robust to corruptions, but incur $O(C + d\sqrt{T})$ total regret. In this paper, we provide the first known algorithms that incur \emph{logarithmic} regret in $T$ while only scaling linearly with the number of corruptions. Specifically, we provide an algorithm \rpledit{(the Log-Concave Density Algorithm for $\croco$, in Algorithm~\ref{alg:logconcave})} which incurs at most $O(C + d\log T)$ regret after $T$ rounds.

It is not surprising that the \rpledit{Log-Concave Density algorithm} can improve over the $O(\sqrt{T})$ regret bounds in \emph{uncorrupted} settings: for example, in one-dimension, the $\bisection$ algorithm obtains $O(1)$ regret. $\bisection$ keeps an interval $[a,b]$ and queries the gradient $f'(m)$ at the midpoint $m=(a+b)/2$. If the gradient is positive, it updates the interval to $[m,b]$ and if negative to $[a,m]$. Cutting plane algorithms ($\cutplane$) correspond to a generalization of $\bisection$ to larger dimensions and they were studied extensively in a series of works for contextual search (see Section~\ref{subsec:rel-work}); these algorithms can find an $\epsilon$-optimal solution in $\log(1/\epsilon)$ iterations. Most cutting plane algorithms (like the ellipsoid method) do not offer regret guarantees. Instead, they offer a \emph{best-iterate} guarantee, i.e., guarantees on $\min_{t \in [T]} f(x_t) - f(x^{\star})$ and this guarantee is not necessarily ``last-iterate''. In other words, there is some point for which $\min_{t \in [T]} f(x_t) - f(x^{\star})$ is small, but one cannot be certain that this is true for point $x_T$. Not only that, but also while they compute a sequence of points $\{x_\tau\}_{\tau \in [t]}$ using only gradient information, they rely on evaluating the function $f(x_t)$ on those points to choose the best. Finally, $\cutplane$ algorithms are not robust to corruptions since they permanently remove elements from the consideration set. We summarize this discussion in Table~\ref{table:croco-comparison}.

\begin{table}[htbp]
\centering
\renewcommand{\arraystretch}{1.2}
\begin{tabular}{ | l |l|l|l| } 
 \hline
   & \textsf{\textbf{GD}} & \textsf{\textbf{CutPlanes}} &  \textsf{\textbf{Log-Concave Density}} {\bf \textcolor{red}{[this paper]}}\\ 
 \hline
  \hline
{\bf Regret guarantee} & $O(\sqrt{T})$  & unknown & $O(\log(T))$  \\ 
{\bf Best-iterate guarantee} & $1/\sqrt{T}$ & $\exp(-T)$ & $\log(T)/T$ \\
{\bf Uses 0-th order oracle} & No & Yes &  No \\ 
{\bf Robust to corruptions} & Yes & No  &  Yes \\
 \hline
\end{tabular}
\caption{Optimal regret guarantees for corruption-robust convex optimization.}\label{table:croco-comparison}
\end{table}

\subsection{Overview of Results and Techniques}

At the heart of our contributions lies a new family of algorithms that we introduce for contextual search. These algorithms are \emph{fundamentally different} \rpledit{from} the approach followed by \emph{every single} prior paper on contextual search, i.e., \cite{cohen2016feature, lobel2016multidimensional, Intrinsic18, liu21, krishnamurthy2021contextual}. Specifically, the algorithms in those papers keep track of a ``\emph{knowledge set}'', which is the set of all possible values of $\theta$ that are consistent with the feedback obtained. This is particularly difficult to do with corruptions and for this reason \cite{krishnamurthy2021contextual} had to develop a sophisticated machinery based on convex geometry to certify that a certain region of possible $\theta$'s can be removed from the knowledge set. 

Instead, we develop a suite of techniques based on maintaining probability density functions over the set of possible values of $\theta$. Intuitively, the density measures to what extent a given value is consistent with the feedback obtained so far. This leads to a more \emph{forgiving} update, that never removes a value from consideration; instead, it just decreases its weight. Surprisingly, these forgiving updates, if chosen properly, can yield the very fast, logarithmic regret guarantees when $C \approx 0$. 

In Section~\ref{sec:croco}, we analyze the problem of corruption-robust convex optimization. Our work is the first to formalize and analyze this setting. To tackle this problem, we propose a new update rule for the densities that is inspired by the update rule used in Eldan's stochastic localization procedure \citep{eldan2013thin}. The advantage of this update is that the density obtained is \emph{log-concave} despite the fact that we only have first-order feedback. In our algorithm, at each round the learner chooses the \emph{centroid} of the density over the knowledge set as their query point. The fact that the density maintained at all times is log-concave allows us to compute its centroid in polynomial time \cite[Chapter~9]{lee_vempala}. Additionally, it leads to a finer control over the amount of the corruption introduced, which leads to the $C_1$-bound instead of $C$.\footnote{Roughly, $C_1$ corresponds to the total absolute corruption introduced. It is formally defined in Section~\ref{sec:model_contextual_search}.}

In Section~\ref{sec:eps-ball}, we focus on the $\eps$-ball loss. Note that this loss function (contrary to the symmetric loss) is not a subcase of corruption-robust convex optimization. At a high level, our method here relies on densities once again. To translate a density into an action $y_t$ in a given round, we introduce the notion of the \emph{$\eps$-window-median} of a distribution supported in $\R$. The $0$-window-median corresponds to the usual median, i.e., a point $m$ such that the total mass above $m$ is equal to the total mass below $m$. The $\eps$-window-median corresponds to the point $m$ such that the total mass above $m+\eps$ is equal to the total mass below $m-\eps$. Our algorithm proceeds by taking the $\eps$-window-median with respect to a projection of the density onto the given context and using it both to compute the query point $y_t$ and the density update. For the $\eps$-ball loss our regret guarantees are \emph{tight}; our algorithm incurs regret $O(C + d \log (1/\eps))$, which matches the lower bound for the corruption-robust contextual search setting (Section~\ref{sec:eps-ball}).

\subsection{Related Work}\label{subsec:rel-work}

Our work is related to two streams of literature; \emph{contextual search} and \emph{adversarial corruptions in learning with bandit feedback}.

\paragraph{Contextual Search.} The classic setting of contextual search has been extensively studied by a series of papers. The approach taken traditionally by the literature has been ``bisection-based''; specifically, the learner maintains a \emph{knowledge set} throughout the $T$ rounds, which contains the set $K$ of all possible vectors for $\thetast$ (i.e., $\thetast \in K$). After every round, the learner eliminates part of the knowledge set according to the feedback that they receive. For example, assume that for a round $t \in [T]$, the learner has observed context $u_t \in \bbR^d$, queried point $y_t \in \bbR$, and received feedback $y_t \geq \yst$ (i.e., $y_t \geq \langle u_t, \thetast \rangle$). Then, for rounds $t+1$ and onward, the learner eliminates all the vectors \rpledit{$\theta \in K$ such that $y_t < \langle u_t, \theta \rangle$}. This family of bisection-based algorithms was first introduced by \citet{cohen2016feature} and we referred to them earlier as $\cutplane$ methods. \citet{cohen2016feature}'s method obtained regret guarantees $O(d^2 \log (d / \eps))$ for the $\eps$-ball and $O(d^2 \log T)$ for the symmetric loss. Subsequently, still drawing intuition from bisection-based methods, \citet{lobel2016multidimensional} introduced the \emph{ProjectedVolume} algorithm, which obtained the optimal regret $O(d \log (1/\eps)$ for the $\eps$-ball loss and an improved regret of $O(d \log T)$ for the symmetric loss. \citet{Intrinsic18} obtained the optimal (in terms of $T$) regret of $O(d^4)$ for the symmetric loss, and \citet{liu21} obtained the optimal regret (in terms of both $T$ and $d$) of $O(d \log d)$ for the symmetric loss. The algorithms highlighted above are generally brittle to adversarial corruptions and noise. This is to be expected: if at some point the corrupted feedback makes the algorithm eliminate $\thetast$ from the knowledge set, then the algorithm will never recover and will ultimately incur linear regret.

\citet{krishnamurthy2023contextual} were the first ones to study contextual search against corrupted feedback. Their algorithm was an involved adaptation of the ProjectedVolume algorithm and achieved regret $O(d^3 \log (1/\eps) \log^2(T) + C \log (T) \log (1/\eps))$ for the $\eps$-ball loss and $O(d^3 \log^3(T) + C \log^2(T))$, where $C$ is the total number of corrupted rounds that the algorithm faces. Importantly, the algorithm of \citet{krishnamurthy2023contextual} need not know $C$ a priori. Since their algorithm was an adaptation of ProjectedVolume, they were able to provide regret bounds for the pricing loss too. \citet{krishnamurthy2023contextual}'s regret bounds were far from optimal. \rpledit{Specifically, the best known lower bound for the $\eps$-ball loss is $\Omega(C + d \log (1/\eps))$ and for the symmetric loss $\Omega(C + d)$, by combining the natural $\Omega(C)$ lower bound for corrupted settings with the lower bounds in \cite{lobel2016multidimensional} for the uncorrupted setting.}

The biggest departure of our work compared to the aforementioned literature is \emph{methodological}; our algorithms are \emph{not} using bisection-based techniques. In fact, our work is the first one in the space of contextual search that maintains (at all times) a probability distribution over the entire initial knowledge set and never eliminates any part of it. Instead, it shifts probability mass around points; higher probability mass corresponds to a point that is more consistent with the feedback that the algorithm has received thus far. This is what allows us to obtain \emph{significantly} improved bounds compared to \citet{krishnamurthy2023contextual}; in fact, for the $\eps$-ball loss, we obtain the optimal regret $O(C + d \log (1/\eps))$ and for the symmetric loss $O (C + d \log T)$. Our algorithms are not only agnostic to $C$ but they also need not know that nature may send corrupted feedback at all.

\paragraph{Adversarial Corruptions in Learning with Bandit Feedback.} To model adversarial noise, we draw inspiration from the model of \emph{adversarial corruptions}, first studied by \citet{lykouris2018stochastic} in the context of multi-armed bandits. In the original model, a learner is interacting with a set of bandits with stochastic rewards and the adversary has a total budget of $C$ corruptions, i.e., at any point during the $T$ rounds the adversary can change the reward that the learner sees and all the changes have to be at most $C$. The regret guarantees of the original paper about stochastic multi-armed bandits were later strengthened by \citet{gupta2019better}, \citet{zimmert2021tsallis} and \citet{masoudian2021improved}. The original model of adversarial corruptions has since been applied to a wide range of problems; examples include linear optimization~\citep{li2019stochastic}, assortment optimization~\citep{chen2024robust}, Gaussian bandit optimization~\citep{bogunovic2020corruption}, learning product rankings~\citep{golrezaei2021learning}, dueling bandits~\citep{agarwal2021stochastic}, and both linear and non-linear contextual bandits~\citep{he2022nearly, ye2023corruption}. 
\section{Model \& Preliminaries}

In this section, we summarize the two problems of interest: Corruption-Robust Contextual Search ($\crocs$) in Section~\ref{sec:model_contextual_search} and Corruption-Robust Convex Optimization ($ \croco$) in Section~\ref{subsec:model-croco}. We also outline some useful preliminaries for the rest of the paper. 

\subsection{Convex Sets and Density Functions}

Given a vector $v \in \R^d$ and a real number $r > 0$ we define the ball around $v$ of radius $r$ as $\B(v,r) = \{x \in \R^d; \norm{x-v} \leq r\}$ where $\norm{\cdot}$ is the  $\ell_2$ norm. We use $\Vol(\B(v,r))$ to denote the volume of $\B(v,r)$, i.e., $\Vol(\B(v,r)) = \int_{\B(v,r)} 1 dx$. We often write $\B$ to refer to the unit ball $\B(0,1)$. For a set $K \subseteq \R^d$, we define its \emph{diameter} to be the biggest distance between any two vectors in $K$ measured in terms of the $\ell_2$ norm.

\paragraph{Densities.} We say that a function $\mu:\R^d \rightarrow \R_+$ is a \emph{density} function if it is integrable and integrates to $1$, i.e., $\int_{\R^d} \mu(x) dx = 1$. We say that a random variable $Z$ is drawn from a probability distribution with density $\mu$ if for every measurable set $S \subseteq \R^d$ it holds that $\P[Z \in S] = \int_S \mu(x) dx$. Given a measurable set $S$, we refer to the function $\mu_S(x) = \1\{x \in S\} / \int_S 1 dx$ as the  uniform density over $S$. To simplify notation, we write $\mu(x)$ (instead of $\mu_S(x)$) whenever clear from context. %

\paragraph{Log-Concave Densities.} We give a brief introduction to log-concave densities, which are used in Section \ref{sec:croco}. For a more complete introduction, see the book by \cite{lee_vempala} or the survey by \cite{lovasz2007geometry}.

\begin{definition}[Log-Concave Functions]\label{def:logconcave}
A function $\rpledit{\mu}: \bbR^d \to \bbR$ is called \emph{log-concave} if it is of the form $\mu(x) = \exp(-g(x))$ for some convex function $g:\bbR^d \to \bbR \rpledit{\cup \{\infty\}}$. If $\int \mu(x) dx = 1$, then we say that $\rpledit{\mu}: \bbR^d \to \bbR$ is a \emph{log-concave density function}. 
\end{definition}
Two important examples of log-concave densities are the Gaussian density (where $g(x) = \norm{x}^2$) and the uniform over a convex set $K \subseteq \bbR^d$ where $g(x) = 0$ for $x \in K$ and $g(x) = \infty$ for $x \notin K$.

We denote by $\cg(\mu, S)$ the \emph{centroid} of $\mu$ over set $S$, defined as:
$$\cg(\mu, S) \triangleq \frac{\int_{S} x \mu(x) dx}{\int_{S} \mu(x) dx}$$
To simplify notation, we write $\cg(\mu)$ to denote the centroid of $\mu$ over all of $\bbR^d$, i.e., $\cg(\mu) = \cg(\mu, \bbR^d)$.

Note that for the \emph{uniform} density over a \emph{convex} set, the above definition corresponds to the usual notion of the centroid of a convex set.

\subsection{Setting 1: Corruption-Robust Contextual Search ($\crocs$)}\label{sec:model_contextual_search}

Let $\ell$ denote a loss function $\ell : \R\times \R \rightarrow \R$ and $\yst \in \R$ be a \emph{target} value that is originally unknown to the learner. Our results for the $\crocs$ setting will focus on two specific loss functions: (i) the $\eps$-ball loss $\ell(y,\yst) = \1 \{ \abs{y-\yst} \geq \eps\}$, which penalizes each query $y$ that is far from the target by at least $\eps$; (ii) the symmetric loss $\ell(y,\yst) = \abs{y-\yst}$, which penalizes each query proportionally to how far it is from the target. Note that $\crocs$ with symmetric loss is a subset of the $\croco$ setting, which is introduced in the next section.

\paragraph{Protocol.}
In $\crocs$, there is a repeated interaction between a learner and an adversary over $T$ rounds.  The adversary initially chooses a vector $\thetast \in \B(0,1)$ that is hidden from the learner. In each round $t \in [T]$ the following events happen:
\begin{enumerate}
    \item The adversary chooses a context %
    $u_t \in \R^d$ such that $\rpledit{\|u_t\| \leq 1}$,
    and reveals it to the learner.
    \item The adversary also selects a corruption level $z_t \in [-1,1]$, which is hidden from the learner.
    \item The learner queries $y_t \in [-1,1]$.
    \item The learner receives feedback $\sigma_t = \sgn(\yst_t-y_t) \in \{-1,+1\}$, where $\yst_t = \dot{u_t}{\thetast}+z_t$.
    \item The learner incurs (but does not observe) loss $\ell(y_t, \yst_t)$.
\end{enumerate}

We consider two different measures of the total amount of corruption added to the system (the total number of noisy queries and the total deviation due to noise):
$$C_0 = \sum_{t \in [T]} \1 \{z_t \neq 0\} \qquad \text{and} \qquad 
C_1 = \sum_{t \in [T]} \abs{z_t} $$

Our goal is to upper bound the total regret $\Reg = \sum_{t \in [T]} \ell(y_t, \yst_t)$. \rpledit{Note that the regret benchmark $\min_{y \in \R^T} \sum_{t \in [T]} \ell(y_t, \yst_t)$ is zero by taking $y_t = \yst_t$.} 
We refer to this setting as ``realizable''. We do not impose any restriction on any specific corruption levels $z_t$. Instead, our eventual regret bounds are functions of the total amount of corruption ($C_0$ or $C_1$) added over the game. Importantly, our algorithms are completely agnostic to the level of corruption introduced by the adversary. The quantities $C_0$ and $C_1$ are used in the analysis but are not used by the algorithm.

\subsection{Setting 2: Corruption-Robust Convex Optimization ($\croco$)}\label{subsec:model-croco}

Consider a fixed convex $L$-Lipschitz function $f: K \subseteq \R^d \rightarrow \R$. A \emph{first-order oracle} for $f$ takes as input a point $x_t \in K$ and returns a subgradient $\nabla_t \in \R^d$, i.e., a point such that for all $z \in K$ it holds that $f(z) \geq f(x_t) + \langle \nabla_t, z-x_t \rangle$ and $\norm{\nabla_t}_2 \leq L$. Importantly, the learner has only access to this subgradient, and \emph{not} to the functional value $f(x_t)$ (i.e., the zero-th order oracle). This is crucial for applications in contextual search and market equilibrium computation \citep{paes2020computing}.

We say that the oracle is $C$-corrupted if for each queried point $x_t \in K$ it returns a vector $\tilde \nabla_t \in \R^d$, $\norm{\tilde \nabla_t}_2 \leq L$ such that:
\begin{equation}\label{eq:corrupted_oracle}
f(z) \geq f(x_t) + \left\langle \tilde \nabla_t, z-x_t \right\rangle - \epsilon_t, \forall z\in K
\end{equation}
and $\sum_{t \in [T]} \epsilon_t \leq C, \forall z \in K$. Notice that the oracle returns only $\tilde \nabla_t$, while the values of $C$ and $\epsilon_t$ remain unknown to the algorithm. We will only use $C$-corrupted oracles in our analysis.

\paragraph{Protocol.} In the corruption-robust convex optimization problem ($\croco$), there is a repeated interaction between a learner and an adversary over $T$ rounds. The adversary chooses the fixed convex function $f$, and does not reveal it to the learner. At each round, the learner issues queries $x_t \in K$, the adversary provides feedback through a $C$-corrupted first-order oracle, and the learner suffers (but does not observe) $f(x_t) - f(\xst)$, where $\xst = \arg \min_{x \in K} f(x)$. We measure the regret as $\Reg = \max_{x^{\star} \in K} \sum_{t \in [T]} (f(x_t) - f(x^{\star}))$. The learner's goal in $\croco$ is for the algorithm to achieve sublinear-in-$T$ regret, while being agnostic to the corruption level $C$.

\section{Corruption Robust Convex Optimization}\label{sec:croco}

We start our analysis from the setting of $\croco$ and subsequently, we show how the algorithm that we propose can be applied without any change in order to obtain regret bounds for the setting of $\crocs$ with symmetric loss. The main result of the section is stated below.

\begin{theorem}[Regret of the Log-Concave Density Algorithm]\label{thm:croco_log_concave_density} Let $L$ be the Lipschitz constant of function $f$ and $D$ be the diameter of set $K$. Then, for $\gamma = 1/(3LD)$,  the regret of the Log-Concave Density Algorithm for $\croco$ is $O(C + d LD \log (T/L))$.
\end{theorem}

As we outlined in Section~\ref{subsec:model-croco}, the original $\croco$ problem is formulated for a fixed convex function $f(\cdot)$. In this section, we will solve a slight generalization where we allow different convex functions $f_t: K \subseteq \R^d \rightarrow \R$ in each round as long as they share a minimizer $\xst$, i.e, there exists $\xst \in K$ such that $\xst \in \text{argmin}_{x \in K} f_t(x), \forall t \in [T]$. 

At a high level, our algorithm (called the ``\emph{Log-Concave Density Algorithm}'') maintains a density function $\mu_t$ which  keeps track of how \emph{consistent} each point $x \in K$ is with the observations (i.e., query returns) seen so far. This is a soft version of the idea used in algorithms such as ellipsoid or centroid, which remove from the consideration set all points that cannot be the minimizer given the gradient information at a given round. Since our gradients may be corrupted, instead of removing a point from consideration immediately after we obtain feedback that is inconsistent with it, we instead decrease its density.

It will be useful to keep a ``structured'' density function $\mu_t$. Formally, we make sure that $\mu_t$ is a \emph{log-concave density}. We then update it in such a way that the density $\mu_t$ remains log-concave throughout the algorithm, which enables efficient sampling from it. The algorithm we use is formally defined below.

\begin{algorithm}[htbp]
\caption{Log-Concave Density Algorithm for $\croco$}
\label{alg:logconcave}
\DontPrintSemicolon
\SetAlgoNoLine
Initialize $\mu_1(x)$ to be the uniform density over $K$ and $\gamma = (3LD)^{-1}$. \;
\For{rounds $t \in [T]$}{
    Query the centroid of distribution $\mu_t$, i.e., $x_t = \int_{K} x \mu_t(x) dx$.\;
    Receive feedback $\tilde \nabla_t$, and update the density as: 
    \[
    \mu_{t+1}(x) = \mu_t(x) \cdot \left(1 - \gamma   \cdot \dot{\tilde \nabla_t}{x - x_t}\right) \numberthis{\label{eq:logconcave-upd}}
    \]
}
\end{algorithm}

\noindent We first argue that distribution $\mu_t$ remains a log-concave density throughout $T$ rounds.

\begin{lemma}\label{lem:logconcave-valid-density}
Let $\gamma$ be such that $\gamma < (LD)^{-1}$ where $D$ is the diameter \rpledit{of} $K$ and $L$ the Lipschitz constant. Then, the density function $\mu_t$ maintained by Algorithm~\ref{alg:logconcave} is a log-concave density for all $t \in [T]$.
\end{lemma}

\noindent \proof{{\bf Proof.}}
We proceed with induction. For the base case, note that the lemma holds by definition, since $\mu_1$ is the uniform density. Assume now that the lemma holds for some $t = n$, i.e., $\mu_t$ is log-concave and $\int_K\mu_t(x) dx = 1$. We now focus on $\mu_{t+1}$. First, note that $\mu_{t+1}$ is non-negative, since from Cauchy-Schwarz
$\abs{\gamma \cdot  \langle \tilde \nabla_t,x-x_t \rangle } \leq \gamma \norm{\tilde \nabla_t} \cdot \norm{x-x_t} < 1 $ and so $1-\gamma \langle{\tilde \nabla_t},{x-x_t}\rangle > 0.$
To see that it integrates to $1$: 
\begin{align*}
    \int_{K} \mu_{t+1}(x)dx  &= \int_{K} \mu_t(x)dx - \gamma \cdot  \left\langle\tilde \nabla_t, \int_{K} x \mu_t(x) dx - x_t \right\rangle \\
    &= 1 - \gamma \cdot \left\langle \tilde \nabla_t, \int_{K} x \mu_t(x) dx - x_t \right\rangle &\tag{inductive hypothesis}\\
    &= 1 + 0 &\tag{definition of $x_t$}
\end{align*}
We are left to show that $\mu_t$ is a log-concave function for all rounds $g$. We use again induction. By definition, the uniform density is log-concave (base case). Assume now that $\mu_t$ is log-concave for some round $t=n$, and let us rewrite it as $\mu_t(x) = \exp ( - g_t(x) )$ for some convex function $g_t$ (Definition~\ref{def:logconcave}). Then, for round $t+1$, the density can be written as:
\[
\mu_{t+1}(x) = \exp(-g_{t+1}(x)), \qquad \text{for} \qquad g_{t+1}(x) = g_t(x) - \log \left( 1-\gamma_t \langle{\tilde \nabla_t},{x-x_t}\rangle\right)
\]
Note that $g_{t+1}$ is a convex function, since it is a sum of convex functions. As a result, by Definition~\ref{def:logconcave}, $\mu_{t+1}$ is a log-concave function. $\Halmos$
\endproof

\noindent We are now ready to prove the main result of this section.

\noindent \proof{{\bf Proof of Theorem~\ref{thm:croco_log_concave_density}.}}
We define a potential function corresponding to the total mass around $\xst$ and argue that picking up loss as a result of the queries we issue leads to concentration of measure:
$$\Phi_t = \int_{K \cap \B(\xst, r)} \mu_t(x) dx$$
for a radius $r = 1/(LT)$. %

By the guarantee of the corrupted oracle (Equation \eqref{eq:corrupted_oracle}), we observe that:
$$ -\left \langle \tilde \nabla_t, \xst-x_t \right\rangle \geq f_t(x_t) - f_t(\xst) - \eps_t $$
and since $\norm{\tilde \nabla_t}\leq L$ we have that for all points $x \in K \cap \B(\xst, r)$, it holds that:
$$ -\left \langle \tilde \nabla_t, x-x_t \right\rangle \geq f_t(x_t) - f_t(\xst) - \eps_t - Lr$$
We now can bound the change of potential as follows:
$$\Phi_t = \int_{K \cap \B(\xst, r)} \mu_{t-1}(x) \left(1-\gamma \left \langle \tilde \nabla_t, x-x_t \right \rangle \right) dx \geq \Phi_{t-1}\cdot (1+\gamma(f_t(x_t) - f_t(\xst) - \eps_t - Lr)) $$
We will now use the fact that $1-x \geq e^{-\alpha_1 x}$ and $1+x \geq e^{\alpha_2 x}$ for $x \in [0,2/3]$ and constants $\alpha_1 = \frac{3}{2} \ln 3 > 1$ and $\alpha_2 = \frac{3}{2} \ln \frac{5}{3} < 1$. We also observe that $0 \leq \gamma (f_t(x_t) - f_t(\xst)) \leq 1/3$ and $0 \leq \gamma (\eps_t + Lr) \leq 2/3$. With that, we bound the potential as follows:
$$\Phi_t \geq \Phi_{t-1} (1+\gamma(f_t(x_t) - f_t(\xst)) ((1-\gamma(\eps_t + Lr)) \geq \Phi_{t-1} \exp(\alpha_2 \gamma (f_t(x_t) - f_t(\xst)) - \alpha_1 \gamma(\eps_t + Lr))  $$
Telescoping and using the fact that densities integrate to at most $1$ we have:
\begin{equation}\label{eq:potential-last-step}
\textstyle 1 \geq \Phi_{T+1} \geq \Phi_1\exp(\alpha_2 \gamma \sum_t(f_t(x_t) - f_t(\xst)) - \alpha_1 \gamma(\sum_t \eps_t + TLr))
\end{equation}
Now, $\Phi_1 = \Vol(K \cap \B(\xst, r)) / \Vol(K) \geq (r/D)^d$ since by shrinking the set $K$ by a factor of $r/D$ around $\xst$ we obtain a set contained in $\B(\xst, r)$ hence the volume $\Vol(K \cap \B(\xst, r))$ is at least $(r/D)^d \Vol(K)$. Now, taking the logarithm on both sides of Equation~\eqref{eq:potential-last-step} and re-arranging, we obtain the following:
$$\sum_{t \in [T]} f_t(x_t) - f_t(\xst) \leq O \left(\sum_{t \in [T]} \eps_t + TLr - \log(\Phi_1)/\gamma \right) = O(C + 1 + LD d \log(LT)) \hspace{10pt} \Halmos$$
\endproof

\paragraph{Polynomial time implementation.} The computationally non-trivial step in the Log-Concave Density Update algorithm is the computation of the centroid. This problem boils down to integrating a log-concave function, since its $i$-th component is $\int x_i \mu_t(x) dx = \int \exp(\log x_i + \log \mu_t(x)) dx$. \rpledit{We observe that the log-density $\log \mu_t(x)$ and its gradient can be computed explicitly in $O(dT)$ time, since:
\begin{equation}\label{eq:log_density_f}
-\log \mu_t(x) = -\sum_{s=1}^t \log \left(1 - \gamma   \cdot \dot{\tilde \nabla_t}{x - x_t}\right)
\end{equation}
\begin{equation}\label{eq:grad_log_density_f}
-\nabla[\log \mu_t(x)] = \sum_{s=1}^t \frac{\gamma \tilde \nabla_t}{1 - \gamma   \cdot \dot{\tilde \nabla_t}{x - x_t}}
\end{equation}
}
\rpledit{Having access to a log-density oracle, it is possible to obtain an  $\eps$-additive approximation of the centroid in $O(\poly(d,1/\eps))$. The first such algorithm was given by \cite{applegate1991sampling} with the bound of $O(d^{10})$ oracle calls. Sampling from a log-concave distribution is an active area of research and recent algorithms provide much better bounds. \cite{lovasz1999hit} provides a $O(d^4)$ algorithm having only zero-order oracle call access to $f$. With a first order, \cite{dwivedi2019log} provides a $O(d^2)$ algorithm called MALA that combines a discretized Langevin Dynamic with the Metropolis-Hastings sampling. We refer to the book by \cite{lee_vempala} for algorithms with an improved running time and for a comparison of the different bounds.}

If given access to an approximate centroid, the proof of Theorem \ref{thm:log_concave_density} can be adapted as follows. Let $\tilde x_t$ be an approximate centroid of $\mu_t(x)$, i.e., the point $\tilde x_t$ is such that:
\begin{equation}\label{eq:apx-centr}
\left\|\tilde x_t - \frac{\int_{K} x \mu_t(x) dx}{\int_{K} \mu_t(x) dx} \right\| \leq \delta.
\end{equation}
Then, the update defined in Equation~\eqref{eq:logconcave-upd} no longer keeps $\mu_t$ a density, but it still keeps it an \emph{approximate} density as follows:
$$\int_{K} \mu_{t+1}(x) dx = \int_{K} \mu_t(x) dx \cdot\left( 1 - \gamma  \cdot \left\langle  \tilde \nabla_t, \frac{\int_{K} x \mu_t(x) dx}{\int_{K} \mu_t(x) dx} - \tilde x_t\right\rangle \right)$$
Note that this is indeed an ``approximate density'', since for $\gamma = 1/(3LD)$
\begin{equation}\label{eq:approx-centr-2}
    1 - \frac{\delta}{3} \leq \frac{\int_{K} \mu_{t+1}(x) dx}{\int_{K} \mu_t(x) dx} \leq 1 + \frac{\delta}{3}
\end{equation}
Setting $\delta = 1/T$ in Equation~\eqref{eq:approx-centr-2} and telescoping for $\int_{K} \mu_t(x) dx$ we get:
\begin{equation}\label{eq:bound-pot-apx-centr}
\frac{1}{e} \leq \int_{\rpledit{K}} \mu_{t+1}(x) dx \leq e
\end{equation}
Finally, the only thing that these derivations change with respect to the regret proof of Theorem~\ref{thm:log_concave_density} is that instead of having $\Phi_{T+1} \leq 1$ now we can only guarantee that $\Phi_{T+1} \leq e$. This only affects the constants in the final regret bound. %

\paragraph{Practical implementation. } \rpledit{Above we showed that in theory, our algorithm can be implemented in $\poly(d,T)$ running time. Even using the best available log-concave sampling techniques available today and ignoring the issue of an approximate centroid, the algorithm still requires $O(d^2)$ calls to compute the centroid with each call costing $O(dT)$ in a total of $O(d^3 T)$ to compute each $x_t$. This is prohibitively expensive for practical applications.

We remark, however, that there are techniques that can be applied to improve the \emph{practical} running time. To address the dependency on $T$, one may subsample the term in Equations \eqref{eq:log_density_f} and \eqref{eq:grad_log_density_f}. In certain problems like $\crocs$ (Section \ref{sec:crocs}) one may try to apply a dimensionality reduction like the Johnson-Lindenstrauss transform and solve the problem for a lower $d$. We also remark that it is often the case that for some functions, the sampler algorithm requires much less iterations than the provable bounds.  In an online companion\footnote{\url{https://gist.github.com/renatoppl/6086184ce5d5a49c617337e98b08afc8}} we provide a Python implementation of the Log-Concave Density algorithm for $\croco$ using the MALA algorithm of \cite{dwivedi2019log}. } 

\subsection{Implication: Online Convex Optimization with Subgradient Feedback}

In online convex optimization with subgradient feedback, the setting is identical to $\croco$ with the only difference being that now we do not require all functions $f_t$ to share a minimizer $\xst$.\footnote{The reason we need a common minimizer in $\croco$ is to guarantee that the terms $f_t(x_t) - f_t(\xst) \geq 0$ for all $t$, which is essential when approximating $1-x$ by $\exp(-\alpha_1 x)$. If we drop the common minimizer assumption, we can still have a weaker guarantee known as \emph{pseudo-regret}.} Instead, the comparator $\xst$ is now taken as a point in $\text{argmin}_{x \in K} \sum_{t \in [T]} f_t(x)$, but each $f_t$ may be minimized at a different point.

Algorithm~\ref{alg:logconcave} is also well-defined for (standard) online convex optimization with subgradient feedback. However, there is a small difference in how the regret is defined, which makes this result slightly weaker than the results of Section~\ref{sec:croco}; instead of comparing against $\sum_{t \in [T]} f_t(\xst)$, our algorithm now compares against a slightly inflated benchmark $(1+\eps) \sum_{t \in [T]} f_t(\xst)$ assuming $f_t(x) \geq 0, \forall x \in K, t \in [T]$ and $\eps > 0$.

\begin{proposition}\label{thm:oco_log_concave_density}Let $f_t:K \rightarrow \R$ be non-negative convex functions for all $t \in [T]$. With $\gamma = \eps/(3L)$, the regret of the Algorithm~\ref{alg:logconcave} for online convex optimization with subgradient feedback has the following pseudo-regret guarantee:
$$\sum_{t \in [T]} f_t(x_t) - (1+O(\eps))  \sum_{t \in [T]} f_t(\xst) \leq C + d L\log(T/L)$$
\end{proposition}

\noindent \proof{{\bf Proof.}}
We proceed as in the proof of Theorem \ref{thm:croco_log_concave_density}, up to the point where we establish that: $\Phi_t \geq \Phi_{t-1}(1+\gamma(f_t(x_t) - f_t(\xst) - \eps_t - Lr))$. Now we observe that $1-x \geq e^{-\alpha_{1,\eps}, x}$ and  $1+x \geq e^{\alpha_{2,\eps} x}$ for $x \in [0,\eps]$ for $\alpha_{1,\eps} = -\log(1-\eps)/\eps$ and $\alpha_{2,\eps} = \log(1+\eps)/\eps$. Using the Taylor expansion of $\log(1+x)$, we get: 
$ 1-O(\eps) \leq \alpha_{2,\eps} \leq 1 \leq  \alpha_{1,\eps} \leq 1+O(\eps)$ and hence $\alpha_{1,\eps} / \alpha_{2,\eps} \leq 1+O(\eps)$. Using that we bound the potential as follows:
$$\Phi_t \geq \Phi_{t-1} \cdot (1+\gamma f_t(x_t)  (1-\gamma( f_t(\xst) + \eps_t + Lr)) \geq \Phi_{t-1} \exp(\alpha_{2,\eps} \gamma f_t(x_t) - \alpha_{1,\eps} \gamma(f_t(\xst) + \eps_t + Lr))  $$
Telescoping and using the fact that densities integrate to at most $1$ we have:
$$\textstyle 1 \geq \Phi_{T+1} \geq \Phi_1\exp\left(\alpha_{2,\eps} \gamma \sum_{t \in [T]} f_t(x_t)  - \alpha_{1,\eps} \gamma \left(\sum_{t \in [T]} (f_t(\xst) + \eps_t) + TLr\right)\right)$$
Taking logarithms on both sides, re-arranging terms, and using the bound of $\Phi_1$ from Theorem \ref{thm:croco_log_concave_density}, we have:
$$\sum_{t \in [T]} f_t(x_t)-  (1+O(\eps)) \sum_{t \in [T]} f_t(\xst)  \leq O \left(\sum_{t \in [T]} \eps_t + TLr - \log(\Phi_1)/\gamma \right) = O(C + 1 + L d \log(LT)) \hspace{10pt} \Halmos$$
\endproof

\subsection{Application to $\crocs$ for the Symmetric Loss}\label{sec:crocs}

We show next that Algorithm~\ref{alg:logconcave} can be used for learning in $\crocs$ with the \rpledit{symmetric} loss and obtains the following regret guarantee. 

\begin{corollary}[Regret of the Log-Concave Density Algorithm]\label{thm:log_concave_density}
For $\crocs$ with the \rpledit{symmetric} loss, the regret of the Log-Concave Density Algorithm is $O(C_1 + d \log T)$, where $C_1 = \sum_{t \in [T]} |z_t|$ is the total amount of corruption which is unknown to the algorithm.
\end{corollary}

{The proof of the corollary is deferred to the Appendix.}

\section{A \texorpdfstring{$O(C_0 + d \log (1/\eps))$}- Algorithm for the \texorpdfstring{$\eps$}--Ball Loss}\label{sec:eps-ball}

We next shift our attention to algorithms for $\crocs$ with the $\eps$-ball loss. Since this loss is non-convex, we cannot use the $\croco$ framework. Instead, we will develop a customized solution to the $\eps$-ball loss, while still using the idea of keeping track of a density function; i.e., our algorithm works by keeping track of a density function $\mu_t : \B(0,1) \rightarrow \R$ that evolves from round to round. Initially, we set $\mu_1$ to be the uniform density over $\B(0, 1)$, i.e., $\mu_1(x) = 1/\Vol(\B(0,1))$ for all $x \in \B(0,1)$. %

\subsection{First Attempt: Using the Standard Median}

We start by describing a natural algorithm which ---although not the algorithm we ultimately analyze--- will be useful for providing intuition. This algorithm is as follows: once the context $u_t$ arrives, we compute the median $y_t$ of $\mu_t$ ``\emph{in the direction $u_t$}''. 
\begin{definition}[Median of a Distribution]\label{def:median-distr}
There are two equivalent ways to define the median of an \rpledit{atomless} distribution in a certain direction.
\begin{enumerate}
    \item Define a random variable $Z = \dot{X}{u_t}$, where $X$ is drawn from a density $\mu_t$. Then, $y_t$ is called the \emph{median of $Z$} if: $\P[Z \geq y_t] = \P[Z \leq y_t]$.
    \item $y_t \in \R$ is the \emph{median} of distribution $f$ if: $\int \mu_t(x) \1\{\dot{u_t}{x} \geq y_t\} dx = \int \mu_t(x) \1\{\dot{u_t}{x} \leq y_t\} dx$.
\end{enumerate}
\end{definition}
Note that since all the distributions that we work with in this work are derived from continuous density functions, they do not have point masses. Hence, the median (and later the $\eps$-window-median) is always well defined.

After we query $y_t$, we receive the feedback of whether $\yst_t =\dot{u_t}{\thetast} + z_t$ is larger or smaller than $y_t$. We do not know the amount of corruption added, but if we believe that it is more likely that this feedback is uncorrupted than corrupted, then we can try to increase the density whenever $\sigma_t (\dot{u_t}{x} - y_t)\geq 0$. For example, we could define:
$$\mu_{t+1}(x) = \left\{ \begin{aligned}
& 3/2 \cdot \mu_t(x), & & \text{if } \sigma_t (\dot{u_t}{x} - y_t)\geq 0 \\
& 1/2 \cdot \mu_t(x), & & \text{if } \sigma_t (\dot{u_t}{x} - y_t)< 0 \\
\end{aligned}\right.$$
Note that since $y_t$ is chosen to be median, then $\mu_{t+1}$ is still a density. 
\begin{lemma}
Function $\mu_t(\cdot)$ is a valid probability density function for all rounds $t$.
\end{lemma}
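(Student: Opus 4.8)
The plan is a straightforward induction on $t$. For the base case, $f_1$ is by definition the uniform density over $\B(0,1)$: it is nonnegative and $\int_{\B(0,1)} f_1(x)\,dx = \Vol(\B(0,1))/\Vol(\B(0,1)) = 1$, so it is a valid density. For the inductive step, I would assume $f_t$ is a valid density (nonnegative, integrating to $1$, supported in $\B(0,1)$) and argue the same for $f_{t+1}$. Nonnegativity and the support condition are immediate: the update multiplies $f_t(x)$ pointwise by either $3/2$ or $1/2$, both strictly positive, so $f_{t+1}(x)\ge 0$ everywhere and $f_{t+1}(x)=0$ whenever $f_t(x)=0$, in particular outside $\B(0,1)$.

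The only thing to check is that $f_{t+1}$ integrates to $1$. I would split $\B(0,1)$ into the two regions $H^+ = \{x : \sigma_t(\dot{u_t}{x} - y_t) \ge 0\}$ and $H^- = \{x : \sigma_t(\dot{u_t}{x} - y_t) < 0\}$, on which $f_{t+1} = \tfrac32 f_t$ and $f_{t+1} = \tfrac12 f_t$ respectively. By the second characterization of the median in Definition~\ref{def:median-distr}, $\int_{\B} f_t(x)\1\{\dot{u_t}{x}\ge y_t\}\,dx = \int_{\B} f_t(x)\1\{\dot{u_t}{x}\le y_t\}\,dx$; since $f_t$ is a continuous density it has no point masses, so the hyperplane $\{x:\dot{u_t}{x}=y_t\}$ carries zero $f_t$-mass, and each of these two integrals equals $1/2$. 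Depending on the sign of $\sigma_t$, $H^+$ and $H^-$ are exactly these two halfspaces (up to a measure-zero boundary), so $\int_{H^+} f_t = \int_{H^-} f_t = 1/2$, giving $\int f_{t+1}(x)\,dx = \tfrac32\cdot\tfrac12 + \tfrac12\cdot\tfrac12 = 1$.

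I do not expect a real obstacle here; the one point that deserves a sentence is the well-definedness of the median $y_t$ and the fact that the dividing hyperplane has zero mass. Both follow from the structural observation that every $f_t$ arises from $f_1$ by multiplying by a function that is piecewise constant across finitely many hyperplanes, hence $f_t$ is bounded and absolutely continuous with respect to Lebesgue measure and therefore atomless; this is exactly the remark already made in the excerpt after Definition~\ref{def:median-distr}, and it legitimizes the equal-mass split used above.
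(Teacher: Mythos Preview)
Your proposal is correct and follows essentially the same approach as the paper: induction on $t$, splitting $\B(0,1)$ into the two halfspaces determined by the median hyperplane, using the median property (and atomlessness) to conclude each half carries $f_t$-mass $1/2$, and then computing $\tfrac{3}{2}\cdot\tfrac{1}{2}+\tfrac{1}{2}\cdot\tfrac{1}{2}=1$. If anything, your write-up is slightly more direct than the paper's algebraic rearrangement and also makes the nonnegativity and zero-mass-hyperplane points explicit.
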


\noindent \proof{{\bf Proof.}}
We proceed with induction. For the base case and by the definition of $\mu_1(\cdot)$ to be a uniform density, the lemma holds. Assume now that $\mu_t(\cdot)$ is a valid probability density for some round $t = n$, i.e., $\int_{\B(0,1)} f_n(x) dx = 1$. Then, for round $t+1 = n+1$: 
\begin{align*}
\int_{\B} \mu_{t+1}(x) dx &= \frac{3}{2}\int_{\B} \mu_t(x) \1\{\sigma_t(\dot{u_t}{x} -  y_t) \geq 0\} dx + \frac{1}{2}\int_{\B} \mu_t(x) \1\{\sigma_t(\dot{u_t}{x} - y_t) < 0\} dx \\
&= \int_{\B} \mu_t(x) \1\{\sigma_t(\dot{u_t}{x} -  y_t) \geq 0\} dx + \frac{1}{2} \int_{\B} \mu_t(x) dx &\tag{grouping terms} \\ 
&= \int_{\B} \mu_t(x) \1\{\sigma_t(\dot{u_t}{x} -  y_t) \geq 0\} dx + \frac{1}{2} &\tag{inductive hypothesis}\\
&= \frac{1}{2} \cdot 2 \cdot \int_{\B} \mu_t(x) \1\{\sigma_t(\dot{u_t}{x} -  y_t) \geq 0\} dx + \frac{1}{2} \\ 
&= \frac{1}{2}\int_{\B} \mu_t(x) \1\{\sigma_t(\dot{u_t}{x} -  y_t) \geq 0\} dx + \frac{1}{2} \int_{\B} \mu_t(x) \1\{\sigma_t(\dot{u_t}{x} - y_t) < 0\} dx + \frac{1}{2} \\
&= \frac{1}{2} + \frac{1}{2} = 1
\end{align*}
where the penultimate equality is due to the definition of $y_t$ being the median of distribution $\mu_t(\cdot)$ (Definition~\ref{def:median-distr}). $\Halmos$
\endproof

Ideally, we would like the mass of the density around $\thetast$ to increase in all uncorrupted rounds. With this update rule, however, this is impossible to argue. To see why, observe that if the hyperplane $\{x \in \R^d; \dot{u_t}{x} = y_t\}$ is far from $\thetast$ then the total density in a ball $\B(\thetast, \eps)$ will increase. However, if the hyperplane intersects the ball $\B(\thetast, \eps)$, then some part of its density will increase and some will decrease. Since the density is non-uniform in the ball, we cannot argue that it will increase in good rounds, i.e., rounds where $y_t \in \B(\thetast, \eps)$.

\subsection{The \texorpdfstring{$\eps$}--Window Median Algorithm}

To address the issue above, we define the notion of the \emph{$\eps$-window median}. 

\begin{definition}[$\eps$-Window Median]\label{def:eps-win-med}
Given a random variable $Z$ taking values in $\R$ we say that an $\eps$-window median of $Z$ is a value $y$ such that: $\P [Z \leq y - {\eps}/{2}] = \P[Z \geq y \rpledit{+} {\eps}/{2} ].$
\end{definition}
We can also define the $\eps$-window median for a density $\mu_t(\cdot)$ as follows.
\begin{definition}[$\eps$-Window Median for Densities]
Given a density $\mu_t$ and a direction $u_t \in \B(0,1)$, we say that the $\eps$-window median of $\mu_t$ in the direction $u_t$ is the $\eps$-window median of a variable $Z = \dot{u_t}{X}$, where $X$ is drawn from a distribution with density $\mu_t$. Equivalently, this is the value $y_t \in \R$ such that:
\[\int_{\B} \mu_t(x) \1\{\dot{u_t}{x} \geq y_t + \eps/2\} dx = \int_{\B} \mu_t(x) \1\{\dot{u_t}{x} \leq y_t - \eps/2\} dx\]
\end{definition}

\begin{algorithm}[htbp]
\caption{$\textsc{$\eps$-Window Median Algorithm}$}
\label{alg:eps-win-med}
\DontPrintSemicolon
\SetAlgoNoLine
Initialize $\mu_1(x)$ to be the uniform density over $\B(0,1)$. \;
\For{rounds $t \in [T]$}{
    Observe context $u_t$.\;
    Query $\eps$-window median of $\mu_t$: $y_t$. \;
    Receive feedback $\sigma_t$ and update the density as:
    \[\mu_{t+1}(x) = \left\{ \begin{aligned}
     & 3/2 \cdot \mu_t(x), & & \text{if } \sigma_t \cdot (\dot{u_t}{x} - y_t)\geq \eps/2 \\
     & 1 \cdot \mu_t(x), & & \text{if } -\eps/2 \leq \sigma_t \cdot (\dot{u_t}{x} - y_t) \leq \eps/2 \\
     & 1/2 \cdot \mu_t(x), & & \text{if } \sigma_t \cdot (\dot{u_t}{x} - y_t) \leq -\eps/2 \\
    \end{aligned}\right.\]
}
\end{algorithm}

We first prove that $\mu_{t+1}(x)$ \rpledit{as defined in Algorithm \ref{alg:eps-win-med}} is a valid density.

\begin{lemma}\label{lem:density-eps-win-med}
Function $\mu_t(\cdot)$ is a valid probability density function for all rounds $t$. 
\end{lemma}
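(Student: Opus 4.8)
The plan is to mimic the induction proof of the previous lemma, but now with the three-way split induced by the $\eps$-window median. The base case is identical: $f_1$ is the uniform density over $\B(0,1)$, so $\int_\B f_1(x)\,dx = 1$. For the inductive step, assume $\int_\B f_t(x)\,dx = 1$ and compute $\int_\B f_{t+1}(x)\,dx$ by breaking $\B(0,1)$ into the three regions appearing in the update rule.

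Concretely, write $A = \{x : \sigma_t(\dot{u_t}{x} - y_t) \geq \eps/2\}$, $M = \{x : -\eps/2 \leq \sigma_t(\dot{u_t}{x} - y_t) \leq \eps/2\}$, and $B = \{x : \sigma_t(\dot{u_t}{x} - y_t) \leq -\eps/2\}$; these partition $\B(0,1)$ up to a measure-zero set (the two hyperplanes). Let $a = \int_\B f_t(x)\1\{x \in A\}\,dx$, $m = \int_\B f_t(x)\1\{x \in M\}\,dx$, and $b = \int_\B f_t(x)\1\{x \in B\}\,dx$, so that $a + m + b = 1$ by the inductive hypothesis. Then $\int_\B f_{t+1}(x)\,dx = \tfrac{3}{2}a + m + \tfrac{1}{2}b$.

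The key observation — this is the one substantive step, the analogue of the median identity used before — is that $a = b$. This is exactly the defining property of the $\eps$-window median: regardless of the sign $\sigma_t$, the condition $\sigma_t(\dot{u_t}{x}-y_t) \geq \eps/2$ describes either $\{\dot{u_t}{x} \geq y_t + \eps/2\}$ or $\{\dot{u_t}{x} \leq y_t - \eps/2\}$, and $\sigma_t(\dot{u_t}{x}-y_t) \leq -\eps/2$ describes the other one; by Definition~\ref{def:eps-win-med} (applied to $Z = \dot{u_t}{X}$ with $X \sim f_t$), these two sets carry equal $f_t$-mass. Hence $\tfrac{3}{2}a + m + \tfrac{1}{2}b = a + m + b = 1$, completing the induction. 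I anticipate no real obstacle here; the only mild care needed is to note that $M$ is nonempty/accounted for correctly and that the two bounding hyperplanes have $f_t$-measure zero (which holds since $f_t$ is built from a continuous density, as remarked after Definition~\ref{def:median-distr}), so the partition into $A$, $M$, $B$ is valid for integration.

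\begin{proof}
We proceed by induction on $t$. For the base case, $f_1$ is the uniform density over $\B(0,1)$, so $\int_{\B} f_1(x)\,dx = 1$. Assume now that $\int_{\B} f_t(x)\,dx = 1$. Since $f_t$ is derived from a continuous density, the hyperplanes $\{x : \dot{u_t}{x} = y_t + \eps/2\}$ and $\{x : \dot{u_t}{x} = y_t - \eps/2\}$ have $f_t$-measure zero, so the three regions in the update rule partition $\B(0,1)$ up to a null set. Write
\begin{align*}
a &= \int_{\B} f_t(x) \1\{\sigma_t(\dot{u_t}{x} - y_t) \geq \eps/2\}\,dx, \\
m &= \int_{\B} f_t(x) \1\{-\eps/2 \leq \sigma_t(\dot{u_t}{x} - y_t) \leq \eps/2\}\,dx, \\
b &= \int_{\B} f_t(x) \1\{\sigma_t(\dot{u_t}{x} - y_t) \leq -\eps/2\}\,dx.
\end{align*}
By the inductive hypothesis, $a + m + b = \int_{\B} f_t(x)\,dx = 1$. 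Moreover, since $\sigma_t \in \{-1,+1\}$, the event $\sigma_t(\dot{u_t}{x} - y_t) \geq \eps/2$ equals either $\{\dot{u_t}{x} \geq y_t + \eps/2\}$ or $\{\dot{u_t}{x} \leq y_t - \eps/2\}$, and the event $\sigma_t(\dot{u_t}{x} - y_t) \leq -\eps/2$ equals the other of these two sets. By the definition of $y_t$ as the $\eps$-window median of $f_t$ in the direction $u_t$, these two sets carry equal $f_t$-mass, so $a = b$. Therefore
\[
\int_{\B} f_{t+1}(x)\,dx = \frac{3}{2}a + m + \frac{1}{2}b = a + m + b = 1,
\]
which completes the induction.
\end{proof}
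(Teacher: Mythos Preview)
Your proof is correct and follows essentially the same approach as the paper: induction on $t$, partitioning $\B(0,1)$ into the three regions from the update rule, and using the defining property of the $\eps$-window median that the two outer regions carry equal $f_t$-mass. The only cosmetic difference is that the paper writes the common value explicitly as $\tfrac{1}{2} - \tfrac{1}{2}\int_{U_0} f_t$, whereas you use $a = b$ directly.
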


{We prove the lemma via a simple induction, and we defer the proof to the Appendix.}

We are now left to bound the regret of Algorithm~\ref{alg:eps-win-med}.

\begin{theorem}[Regret of $\eps$-Window Median]
The regret of the $\eps$-Window Median Algorithm is $O(C_0 + d \log(1/\eps))$.
\end{theorem}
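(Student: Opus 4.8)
The plan is to use a potential function argument based on the density near $\thetast$. Define the potential $\Phi_t = \log \int_{\B(\thetast, \eps/2)} f_t(x)\,dx$ (or perhaps with a slightly smaller radius — the exact constant will need care), and track how it evolves across rounds. Since $f_1$ is uniform on $\B(0,1)$, the initial potential is $\Phi_1 = \log(\Vol(\B(\thetast,\eps/2))/\Vol(\B(0,1))) = -\Theta(d\log(1/\eps))$. The potential can never exceed $0$, since $f_t$ is always a valid density (Lemma~\ref{lem:density-eps-win-med}) and so $\int_{\B(\thetast,\eps/2)} f_t \le 1$. Hence the total increase of $\Phi_t$ over all rounds is at most $O(d\log(1/\eps))$. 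The regret bound will follow if I can show: (i) in every uncorrupted round $t$ where the algorithm incurs loss (i.e.\ $|y_t - \yst_t| \ge \eps$), the potential increases by a constant $\Omega(1)$; and (ii) in every corrupted round, the potential decreases by at most $O(1)$.

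For step (i): in an uncorrupted round, $\yst_t = \dot{u_t}{\thetast}$, so incurring loss means $\sigma_t(\dot{u_t}{\thetast} - y_t) \ge \eps$ (the feedback points toward $\thetast$, and $\thetast$ is at least $\eps$ away on the correct side). The key geometric observation is that every point $x \in \B(\thetast, \eps/2)$ then satisfies $\sigma_t(\dot{u_t}{x} - y_t) \ge \eps - |\dot{u_t}{x - \thetast}| \ge \eps - \|x-\thetast\| \ge \eps/2$, since $\|u_t\|=1$. Therefore the entire ball $\B(\thetast,\eps/2)$ lies in the set $U_+$ where the density is multiplied by $3/2$. Thus $\int_{\B(\thetast,\eps/2)} f_{t+1} = \frac32 \int_{\B(\thetast,\eps/2)} f_t$, so $\Phi_{t+1} = \Phi_t + \log(3/2)$. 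For step (ii): regardless of what happens in a corrupted round, the density at any point is multiplied by at least $1/2$, so $\Phi_{t+1} \ge \Phi_t - \log 2$. Also, in any round (corrupted or not) the potential never decreases by more than $\log 2$, and in uncorrupted rounds with no loss it is nondecreasing (the ball lies in $U_+ \cup U_0$ by the same argument, since $|y_t - \yst_t| < \eps$ forces $\sigma_t(\dot{u_t}{x}-y_t) \ge -\eps/2$ for all $x \in \B(\thetast,\eps/2)$).

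Putting it together: let $L$ be the number of uncorrupted rounds with nonzero loss and recall $C_0$ is the number of corrupted rounds. Then
\[
0 \ge \Phi_{T+1} \ge \Phi_1 + L\log(3/2) - C_0 \log 2,
\]
which rearranges to $L \le \frac{1}{\log(3/2)}\bigl(C_0\log 2 - \Phi_1\bigr) = O(C_0 + d\log(1/\eps))$. The total regret is $L$ (for the $\eps$-ball loss, uncorrupted rounds with loss contribute $1$ each) plus at most $C_0$ from the corrupted rounds (loss at most $1$ per round), giving $\Reg \le L + C_0 = O(C_0 + d\log(1/\eps))$.

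The main obstacle I anticipate is getting the geometric containment argument in step (i) exactly right — in particular making sure the radius of the ball in the potential ($\eps/2$ vs.\ some other constant times $\eps$) is compatible with both the "$+\epsilon/2$ threshold" in the definition of $U_+$ and with the definition of the $\eps$-ball loss (which uses threshold $\eps$, not $\eps/2$). There may be a factor-of-two mismatch that forces using radius $\eps/2$ in the potential while the loss is defined with $\eps$; this only changes constants in the $d\log(1/\eps)$ term, so it is not serious, but it needs to be handled cleanly. A secondary point to verify is that the $\eps$-window median $y_t$ always exists and lies in $[-1,1]$ (or that querying outside this range is harmless), which follows from continuity of the densities as noted after Definition~\ref{def:median-distr}.
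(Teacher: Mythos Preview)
Your proposal is correct and essentially identical to the paper's own proof: the same potential $\int_{\B(\thetast,\eps/2)} f_t$, the same three-case analysis (corrupted rounds lose at most a factor $1/2$, uncorrupted loss rounds gain exactly $3/2$, uncorrupted no-loss rounds do not decrease), and the same telescoping to get $L = O(C_0 + d\log(1/\eps))$. The factor-of-two concern you raised is not an issue --- the radius $\eps/2$ in the potential matches perfectly with the $\eps$ threshold in the loss, exactly as in your step (i) calculation.
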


\noindent \proof{{\bf Proof.}}
We define a potential function:
$$\Phi_t = \int_{\B(\thetast, \eps/2)} \mu_t(x) dx$$
For each round $t$, we distinguish the following three cases.

For \emph{Case 1}, if round $t$ is a corrupted round, then the potential decreases by at most a factor of $2$, i.e., $\Phi_{t+1} \geq \Phi_t/2$. This is because regardless of the feedback $\sigma_t$: $\mu_{t+1}(x) \geq (1/2) \mu_t(x)$ for all $x$. Note that there are at most $C_0$ such corrupted rounds.

For \emph{Case 2}, assume that round $t$ is an uncorrupted round in which we pick up a loss of $1$. In this case, note that the potential increases by a factor of $3/2$, i.e., $\Phi_{t+1} = (3/2) \Phi_t$. To see this, note that since we pick up a loss of $1$, then by definition the distance from $\thetast$ to the hyperplane $\{x; \dot{u_t}{x} = y_t\}$ has to be at least $\eps$. As a consequence, the ball $\B(\thetast, \eps/2)$ has to be inside the halfspace $\{x; \sigma_t(\dot{u_t}{x} - y_t) \geq \eps/2\}$ and therefore, $\mu_{t+1}(x) = (3/2) \mu_t(x)$ for all $x \in \B(\thetast, \eps/2)$. We denote by $L$ the total number of such uncorrupted rounds. Note that this $L$ corresponds also the total loss suffered through these rounds.

For \emph{Case 3}, assume that $t$ is an uncorrupted round in which we incur a loss of $0$. In that case, observe that the potential does not decrease, i.e., $\Phi_{t+1} \geq \Phi_t$. Indeed, since the round is uncorrupted, it must be the case that $\sigma_t (\dot{u_t}{\thetast}-y_t) \geq 0$. Therefore, for all $x \in \B(\thetast, \eps/2)$ we must have:  $\sigma_t (\dot{u_t}{x}-y_t) \geq -\eps/2$. Hence, $\mu_{t+1}(x) \geq \mu_t(x)$ for all $x \in \B(\thetast, \eps/2)$.

Putting it all together and telescoping for $\Phi_t$ we obtain:
$$\Phi_{T+1} \geq \Phi_1 \cdot \left(\frac{1}{2}\right)^{C_0} \cdot \left(\frac{3}{2}\right)^{L}$$
Since $f_{t}$ is always a density (Lemma~\ref{lem:density-eps-win-med}), we have that $\Phi_{T+1} \leq 1$. So, taking logarithms for both sides of the above equation, we get:
$$0 \geq \log \Phi_1 + C_0 \log \frac{1}{2} + L \log \frac{3}{2}$$
Reorganizing the terms:
$$L \leq O(C_0 - \log (\Vol(\B(\thetast, \eps))) = O(C_0 + d \log (1/\eps))$$
Finally, note that the regret from corrupted rounds is at most $C_0$ and the regret from uncorrupted rounds is $L$, so $\Reg \leq C_0 + L \leq O(C_0 + d \log(1/\eps)). \hspace{10pt} \Halmos$
\endproof

\paragraph{Relation to Multiplicative Weights.} Like the traditional MWU algorithm, we keep a weight over the set of candidate solutions and update it multiplicatively. However, it is worth pointing out some important key differences. First, unlike in experts' or bandits' settings, we do not get to observe the loss (not even an unbiased estimator thereof). We can only observe binary feedback, so it is impossible to update proportionally to the loss in each round. Second, we do not choose an action proportionally to the weights like MWU or EXP3. Instead, we use the $\eps$-window-median. \rpledit{In some sense, our algorithm resembles a (soft) policy elimination algorithm; we maintain a set of ``hypotheses'' (pertaining to the true $\theta^{\star}$). For those hypotheses within the uncertainty bound (i.e., our $\eps$-window), we keep their weight as is. Instead, the hypotheses that clearly violate the feedback received are downgraded exponentially.}

We conclude this section by discussing the running time of the $\eps$-window-median algorithm.

\begin{lemma}[Running Time]\label{lem:eps-win-med-runtime}
Algorithm~\ref{alg:eps-win-med} has runtime $O\left(T^d \cdot \poly(d,T)\right)$. 
\end{lemma}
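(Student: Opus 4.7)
The plan is to exploit a very structured representation of $f_t$. The first step is to observe, by unrolling the update rule in Algorithm~\ref{alg:eps-win-med}, that
\[ f_t(x) = \frac{1}{\Vol(\B(0,1))} \prod_{s < t} \alpha_s(x), \qquad \alpha_s(x) \in \{1/2,\, 1,\, 3/2\}, \]
where $\alpha_s(x)$ depends only on which of the three strips defined by the two hyperplanes $H^+_s = \{x : \dot{u_s}{x} = y_s + \eps/2\}$ and $H^-_s = \{x : \dot{u_s}{x} = y_s - \eps/2\}$ contains $x$. Consequently $f_t$ is a piecewise constant function on the cells of the hyperplane arrangement $\mathcal{H}_t$ generated by the $2(t-1)$ hyperplanes $\{H^+_s, H^-_s : s<t\}$, restricted to $\B(0,1)$. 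By the classical Zaslavsky / cell-count bound, $n$ hyperplanes in $\R^d$ induce at most $\sum_{i=0}^d \binom{n}{i} = O(n^d)$ cells, so $f_t$ is described by a list of at most $O(t^d)$ convex cells, each tagged with a constant value.

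The representation gives a natural algorithm. I would maintain, in round $t$, this list of cells together with their values. The trivial part is the density update once $\sigma_t$ is received: we insert the two new hyperplanes $H_t^\pm$ into $\mathcal{H}_t$, refine the affected cells (each hyperplane cuts at most $O(t^{d-1})$ existing cells, producing at most $O(t^d)$ new cells in total), and rescale each new cell by the appropriate factor in $\{1/2,1,3/2\}$. This is pure combinatorial bookkeeping in $\poly(d,t) \cdot t^d$ time.

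The meat is computing the $\eps$-window median in direction $u_t$. Let $G_t(c) = \int_\B f_t(x)\,\1\{\dot{u_t}{x} \leq c\}\,dx$. Because $f_t$ is constant on each cell, $G_t(c)$ decomposes as $\sum_i v_i \cdot V_i(c)$, where $V_i(c)$ is the volume of the portion of cell $i$ lying in the halfspace $\{\dot{u_t}{x} \leq c\}$. As $c$ varies, each $V_i$ is a continuous piecewise polynomial of degree at most $d$ in $c$, with breakpoints precisely at the $c$-values where the slicing hyperplane $\{\dot{u_t}{x}=c\}$ passes through a vertex of $\mathcal{H}_t \cup \{\partial \B(0,1)\}$. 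There are $O(t^d)$ such vertices in total, so $G_t$ is a piecewise polynomial of degree at most $d$ with $O(t^d)$ pieces. The $\eps$-window median is the unique $y_t$ solving $G_t(y_t + \eps/2) - G_t(y_t - \eps/2) = 1 - 2G_t(y_t - \eps/2)$; equivalently, the zero of a known piecewise polynomial. I would enumerate the $O(t^d)$ candidate intervals, evaluate $G_t$ at each breakpoint (monotonicity of $G_t$ makes this a one-dimensional search), and inside the bracketing interval solve a univariate polynomial equation of degree $\leq d$ in closed form or by standard root finding in $\poly(d)$ time.

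The main obstacle is evaluating $G_t$ and the coefficients of each polynomial piece, since this requires computing volumes of regions of the form $\B(0,1)\,\cap\,\text{(convex polytope)}\,\cap\,\text{(halfspace)}$. For \emph{fixed} $d$, however, this is polynomial-time computable: each cell can be triangulated into $\poly(t)$ simplices, and the volume of a simplex clipped by a ball admits closed-form integration in $\poly(d)$ time. Thus each round costs $O(t^d \cdot \poly(d,t))$ time, and summing over $t \in [T]$ gives the stated $O(T^d \cdot \poly(d,T))$ bound.
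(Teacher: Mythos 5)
Your proposal follows essentially the same route as the paper: represent $f_t$ as a piecewise-constant function on the cells of the hyperplane arrangement accumulated so far, bound the number of cells by $O(T^d)$ (the paper cites the Whitney number, you cite Zaslavsky's bound --- same count), and maintain the cells explicitly so that each integral is a sum over $O(T^d)$ cell volumes. The one place you diverge is in locating the window median: the paper observes that $\psi(y) = G_t(y-\eps/2)/\bigl(1-G_t(y+\eps/2)\bigr)$ is monotone and runs a binary search, explicitly noting that an \emph{approximate} median with $\psi(y)\in[1-\eps,1+\eps]$ suffices for the regret analysis; you instead enumerate the $O(t^d)$ breakpoints of the piecewise-polynomial $G_t$ and solve for an exact root. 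Your version is fine in spirit but leans on the claim that the volume of a simplex clipped by the unit ball has a closed form computable in $\poly(d)$ time, which is the only shaky step --- exact ball-polytope volumes are not standard in general dimension. The paper's relaxation to an approximate median (combined with numerical integration to accuracy $1/\poly(T)$) is the cleaner way out, and you could adopt it verbatim; otherwise the two arguments are the same.
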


\noindent \proof{{\bf Proof.}}
The running time in each step is dominated by the computation of the $\eps$-window median. Given an oracle that for each $u \in \R^d$ and $y \in \R$ returns the integral $\int \mu_t(x) \1\{ \dot{u}{x} \leq y\} dx$, we can use binary search to determine the $\eps$-window median. Observe that the function: $$\psi(y) = \frac{\int \mu_t(x) \1\{ \dot{u}{x} \leq y - \eps/2\} dx}{\int \mu_t(x) \1\{ \dot{u}{x} \geq y + \eps/2\} dx} $$
is monotonically increasing and computing the  $\eps$-window median is equivalent to finding a value of $y$ such that $\psi(y)=1$. Note also that the analysis does not require us to query the $\eps$-window median exactly. Rather, any point $y$ with $\psi(y) \in [1-\eps, 1+\eps]$ would lead to the same bound with a change only in the constants. 

Next, we discuss how to design an oracle to compute the integral $\int \mu_t(x) \1\{ \dot{u}{x} \leq y\} dx$. Note that $\mu_t$ is piecewise constant, where each piece is one the regions in space determined by the hyperplanes $\{x: \dot{u_t}{x} = y_t$\}. The maximum number of regions created by $T$ hyperplanes in $\R^d$ is given by the Whitney number, which is at most $O(T^d)$ (\cite{stanley2004introduction}). Keeping track of each of these regions explicitly leads to an $O(T^d \poly(d,T))$ algorithm for computing the integral. $\Halmos$
\endproof

\rpledit{The runtime of $O(T^d)$ can be rather impractical, unless the dimension $d$ is small. The main merit of the $\eps$-window median algorithm is to achieve the optimal regret for $\crocs$ (and not focus on the runtime). That said, our algorithm not only achieves the optimal regret, but it does so with a runtime that is faster than that of the previous state-of-the-art~\citep{krishnamurthy2023contextual}. We leave as open problem whether it is possible to obtain the same guarantee using  a $\poly(d,T)$ algorithm.}

\section{Discussion}

In this paper, we studied learning in contextual search settings while being robust and agnostic to adversarial noise. Traditionally, contextual search settings focus on two loss functions: (i) the $\eps$-ball loss, and (ii) the absolute loss. For the $\eps$-ball loss, we introduced an algorithm with regret $O(C + d \log (1/\eps)$, thus significantly improving over the previously known bound of $O(d^3 \log (1/\eps) \log^2(T) + C \log (T) \log(1/\eps))$ of \citet{krishnamurthy2023contextual}. Based on prior work, our regret guarantee is \emph{tight}. For the absolute loss, we provided an efficient algorithm with regret $O(C + d \log T)$. Aside from the optimized regret guarantees, our techniques represent a significant contribution, as they depart from prior approaches in contextual search settings. Specifically, instead of the traditional view of maintaining a set of vectors that is consistent with the feedback that we have received so far, we keep track of carefully constructed density functions over the original set of target vectors.

En route to obtaining the efficient algorithm for the absolute loss, we studied a more general setting: learning in convex optimization settings while being robust and agnostic to adversarial noise and receiving only subgradient-type feedback from the adversary; we call this the $\croco$ setting. The $\croco$ setting (and the approach that we take to tackle it) can be of independent interest beyond contextual search, especially given the implications of our algorithm for obtaining approximate pseudo-regret guarantees for standard online convex optimization with subgradient feedback.

There are several avenues for future research stemming from our work. In terms of the $\crocs$ setting with absolute loss, the question of obtaining \emph{optimal} regret bounds while being agnostic to adversarial noise remains open: \emph{Is it possible for the densities-based approach to obtain regret $O(C + d)$ in the absolute loss?} Our density-based approach may also be useful in establishing optimized regret bounds for $\crocs$ with the \emph{pricing loss}, although it seems that such a result would require significant new machinery; the best known regret bound for the pricing loss in the corruption setting currently is $O(C \log^2 (T) + d^3 \log^3(T))$~\citep{krishnamurthy2023contextual}, which is significantly bigger than the lower bound of $\Omega(d \log \log T)$ of the noiseless setting. Note that the algorithmic approach of \citet{krishnamurthy2023contextual} worked because after a fixed set of rounds, the set of remaining vectors consistent with the feedback received thus far had converged to a ball of radius $\eps$ around the target vector. From that point onward, using the smallest price within this $\eps$ ball guaranteed that the extra regret picked up was $\eps T$. In our density-based approach, however, the knowledge set of vectors never changes; we just shift probability mass between the points. As a result, we cannot guarantee that after a fixed number of rounds we will know how to provide the lowest price that will guarantee only an $\eps T$ loss. Finally, in terms of $\croco$, it would be interesting to see if the density-based approach could be tightened in order to obtain no-regret bounds, rather than no-approximate-regret ones. 

\bibliographystyle{plainnat}
\bibliography{refs, refs2}

\newpage
{
\section*{Appendix}

In this section, we include 2 proofs which were omitted from the main body of the paper.

\noindent \proof{{\bf Proof of Corollary~\ref{thm:log_concave_density}.}
Observe that the contextual search problem with the \rpledit{symmetric} loss $\ell(y,y^*) = \abs{y - y^*}$ can be viewed as an instance of $\croco$ where $f_t(x) = \abs{\langle u_t, x- \thetast\rangle}$.  Using the notation in Section \ref{sec:model_contextual_search}, we observe that if $\sigma_t \in \{-1,+1\}$ is the feedback in the contextual search problem then $\tilde \nabla_t = - \sigma_t u_t$ is a corrupted gradient oracle for $\croco$ with corruption level $C_1$. To see that, define $\tilde f_t(x) = \abs{\langle u_t, x- \thetast\rangle - z_t}$ where $z_t$ is the corruption level introduced by the adversary defined in Section \ref{sec:model_contextual_search}. With that, $\tilde \nabla_t$ is a valid subgradient for $\tilde f_t$, i.e.,
$\tilde f_t(x) \geq \tilde f_t(x_t) + \langle\tilde \nabla_t, x - x_t \rangle $. Now, observe that $\abs{f_t(x) - \tilde f_t(x)} \leq z_t, \forall x$. Combining these, we get:
$$ f_t(x) \geq  f_t(x_t) + \left \langle\tilde \nabla_t, x - x_t \right \rangle - 2 z_t $$
which is the definition of a corrupted oracle in Equation \eqref{eq:corrupted_oracle}. Finally, observe that while all $f_t$'s are different functions, $\thetast$ is a common minimizer for all $t \in [T]$. The Lipschitz constant is bounded by  $L = 1$ since $\norm{u_t} \leq 1$ and the diameter is bounded by $D=2$ since $\norm{\thetast} \leq 1$. Applying Theorem \ref{thm:croco_log_concave_density} directly to this setting gives us the desired $O(C_1 + d \log T)$ bound.
\endproof
}

{
\noindent \proof{{\bf Proof of Lemma~\ref{lem:density-eps-win-med}.}}
We prove this lemma by induction. For the base case, note that by definition the lemma holds for $t = 1$, since $f_1(x)$ is the uniform density over $\B(0,1)$. Assume now that $f_t(x)$ is a density for some $t = n$, i.e., $\int_\B f_t(x) dx = 1$. Then, for round $t+1 = n+1$ we define the following sets: 
\begin{align*}
U_+ &= \left\{ x \in \B(0,1): \sigma_t (\dot{u_t}{x} - y_t)\geq \eps/2 \right\} \\
U_0 &= \left\{ x \in \B(0,1): -\eps/2 \leq \sigma_t (\dot{u_t}{x} - y_t) \leq \eps/2\right \} \\
U_- &= \left\{ x \in \B(0,1): \sigma_t (\dot{u_t}{x} - y_t) \leq -\eps/2\right\}
\end{align*}
As for $f_{t+1}(x)$ we have:
\begin{align*}
    \int_\B f_{t+1}(x) dx &= \frac{3}{2}\int_{U_+} f_t(x)dx + \int_{U_0} f_t(x)dx + \frac{1}{2}\int_{U_-} f_t(x) dx \\
    &= \int_{U_+} f_t(x)dx + \frac{1}{2} \int_{U_0} f_t(x)dx + \frac{1}{2} \int_\B f_t(x) dx &\tag{grouping terms} \\ 
    &= \int_{U_+} f_t(x)dx + \frac{1}{2} \int_{U_0} f_t(x)dx + \frac{1}{2} &\tag{inductive hypothesis} \\
    &= \frac{1}{2} + \frac{1}{2} = 1
\end{align*}
where the penultimate inequality is due to the following property which is direct from the definition of the $\eps$-window median (Definition~\ref{def:eps-win-med}):
\[
\int_{U_+} f_t(x) dx= \int_{U_-} f_t(x) dx = \frac{1}{2}-\frac{1}{2} \int_{U_0} f_t(x) dx
\]
This concludes our proof. $\Halmos$
\endproof
}

\end{document}